\useunder{\uline}{\ul}{}
\let\svtikzpicture\tikzpicture
\def\tikzpicture{\noindent\svtikzpicture}
\newcommand{\uset}[1]{\ifmmode\left\{\,#1\,\right\}\else\{\,#1\,\}\fi}
\newcommand{\ulst}[1]{\ifmmode\left[\,#1\,\right]\else[\,#1\,]\fi}
\newcommand{\upar}[1]{\ifmmode\left(\,#1\,\right)\else(\,#1\,)\fi}
\newcommand{\uioc}[1]{\ifmmode\left(\,#1\,\right]\else(\,#1\,]\fi}
\newcommand{\uico}[1]{\ifmmode\left[\,#1\,\right)\else[\,#1\,)\fi}
\newtheorem{theorem}{Theorem}
\newenvironment{proof}{\noindent\textbf{Proof}.}{\hfill$\square$}
\newtheorem{definition}{Definition}
\journal{Journal name}
\begin{document}

\begin{frontmatter}

\title{Deep Learning Meets Oversampling: A Learning Framework to Handle Imbalanced Classification}

\author[unifal]{Sukumar Kishanthan\corref{cor1}}
\ead{kishanthansukumar@gmail.com}

\author[label2]{Asela Hevapathige}
\ead{asela.hevapathige@anu.edu.au}

\address[unifal]{Faculty of Engineering, University of Ruhuna, Galle 80000, Sri Lanka}

\cortext[cor1]{Corresponding author}

\address[label2]{College of Engineering, Computing and Cybernetics, The Australian National University, Canberra ACT 2601, Australia}

\begin{abstract}
Despite extensive research spanning several decades, class imbalance is still considered to be a profound difficulty for both machine learning and deep learning models. While data oversampling is the foremost technique to address this issue, traditional sampling techniques are often decoupled with the training phase of the predictive model, resulting suboptimal representations. To address this, we propose a novel learning framework that is capable in generating synthetic data instances in a data-driven manner. The proposed framework formulates the oversampling process as a composition of discrete decision criteria, thereby enhancing the representation power of the model's learning process. Extensive experiments on the imbalanced classification task demonstrates the superiority of our framework over the state-of-the-art algorithms.
\end{abstract}

\begin{keyword}
Deep learning \sep Class imbalance \sep Oversampling \sep Multi-layer perceptrons \sep Representation learning
\end{keyword}

\end{frontmatter}


\section{Introduction} \label{sec:intro}

Class imbalance is a non-trivial and enduring challenge in machine learning and data mining, where the distribution of classes within a dataset is skewed. This disproportion often leads to biased model training, making the classifier inclined towards predicting the majority class in the inference phase\cite{guo2008class,johnson2019survey}. The class imbalance problem cannot be readily overlooked, as many real-world datasets related to critical tasks, such as those used in the medical field for disease identification, the finance sector for fraud detection, and network intrusion datasets used in cyber security, exhibit such asymmetric class distributions \cite{cieslak2006combating,japkowicz2002class,al2021financial}.
\\
Existing machine learning and deep learning approaches primarily utilize resampling techniques to tackle class imbalance which involves adjustment techniques to balance the class distribution in datasets \cite{khushi2021comparative,marques2013suitability}. Among diverse resampling techniques, Oversampling approaches are commonly preferred for addressing class imbalance mainly due to their inherent ability to equalize the class distribution while preserving data semantics and achieving superior performance. There has been a plethora of different oversampling techniques proposed in the literature, ranging from traditional approaches \cite{chawla2002smote,tang2008svms,han2005borderline,he2008adasyn,douzas2018improving} to those based on deep learning \cite{ando2017deep,dablain2022deepsmote,karunasingha2023oc}. Traditional oversampling algorithms are often applied as a pre-processing step and are decoupled with the classifier training process. This introduces a significant limitation as synthetic data generated during oversampling may not fully align with the semantics of the downstream classification task. On the other hand, existing deep learning-based oversampling approaches have expansive parameter search space and higher training complexities, often leading to model overfitting and subsequent poor generalization of test data.

In this work,  we propose a novel deep learning-based oversampling framework, namely \emph{AutoSMOTE} ( \textbf{\underline{Auto}}mated \textbf{\underline{S}}ynthetic \textbf{\underline{M}}inority \textbf{\underline{O}}versampling \textbf{\underline{TE}}chnique), to address the aforementioned limitations. AutoSMOTE is tailored to generate synthetic minority samples in a data-driven manner. Our approach is an end-to-end architecture jointly optimized alongside the classifier. AutoSMOTE incorporates a set of learnable discrete decision criteria to define the oversampling process which helps to reduce the parameter search space and training complexities of the model by a significant margin, thereby mitigating model overfitting and achieving better generalization. Our research contributions are summarized as follows:

\begin{itemize}
  \item \textbf{Novel Perspective on Oversampling: } We formulate the oversampling process as a composition of discrete decision criteria, that enriches the capability of the model to represent nuanced semantics among synthetic data instances. 
  \item \textbf{Deep Learning Framework: } We propose a novel deep oversampling framework, AutoSMOTE, to handle imbalanced classification. To this end, two variants are proposed. Further, we theoretically analyze the generalization error bounds of these variants to provide valuable insights on their performance.

  \item \textbf{Empirical Performance: } We extensively evaluate the proposed framework on imbalanced classification tasks with a variety of datasets, demonstrating its superior performance.
\end{itemize}

We organize the remainder of this paper as follows:
Section~\ref{sec:related} provides the related literature of our work.
Section~\ref{sec:methodology} formally defines our problem and present the proposed methodology. We provide theoretical insights on our work in Section \ref{sec:theory}. The experimental design is explained in Section \ref{sec:expriment}, and the empirical performance of our model is presented in Section \ref{sec:result}. Finally, conclusions of our work alongside potential future extensions are discussed in Section~\ref{sec:conclusions}.

\section{Related Work} \label{sec:related}

Machine learning techniques for alleviating class imbalance issue can be categorized under three primary directions, namely, Data level, Algorithmic level, and Hybrid level approaches \cite{sharma2022review,sowah2021hcbst}. Data-level approaches mainly focused on modifying training set data using resampling techniques such as minority class oversampling \cite{gosain2017handling} and majority class undersampling \cite{devi2020review}, with the goal of balancing the class distribution. On the other hand, algorithmic-level approaches modify the learning algorithm using techniques such as cost-sensitive learning \cite{ling2008cost} to give more importance to minority classes in the learning process. Hybrid-level approaches combine both data-level and algorithmic-level methods in provide solutions that could improve model performance on imbalanced datasets. We refer the reader to the survey articles by Tyagi et al.\cite{tyagi2020sampling}, Fernandez et al.\cite{fernandez2018algorithm}, and Ahmed et al.\cite{ahmed2023comparative} for detailed discussions on these approaches.

Our work falls into the category of Data-level approaches where we present an oversampling approach to alleviate class imbalance. Existing oversampling techniques can be divided into two categories, namely, traditional methods, and deep learning-based methods.

\paragraph{\textbf{Traditional Oversampling Methods: }} Traditional oversampling algorithms have been widely investigated and prominently applied for imbalanced classification. The most well-known oversampling technique, Synthetic Minority Over-sampling Technique (SMOTE), generates synthetic minority samples by combining existing minority data instances through linear interpolation \cite{chawla2002smote}. Building on the increased popularity of this approach, many subsequent works have been proposed with further refinements. SVMSMOTE is one such extension that combines SMOTE with principles of Support Vector Machines in order to generate minority instances that reside near the class-separating decision boundary, making the oversampling process more informative \cite{tang2008svms}. K-means SMOTE utilizes a grouping approach where they first cluster the minority class instances and then generate synthetic samples within each cluster using SMOTE, aiming to enhance the diversity in the data generation process \cite{last2017kmeanssmote}. Borderline-SMOTE \cite{han2005borderline} and ADASYN \cite{he2008adasyn} are another two techniques that aim to enhance the significance of the data generation of SMOTE by focusing on minority data instances near the decision boundary or difficult to classify, respectively. SMOTE-N and SMOTE-NC are another two extensions of SMOTE, proposed to enhance the oversampling process in categorical and continuous features effectively \cite{chawla2002smote}.

\paragraph{\textbf{Deep Learning-based Oversampling Methods: }} Deep oversampling approaches leverage deep learning models to handle synthetic data generation. DeepSMOTE \cite{dablain2022deepsmote} and GAMO \cite{mullick2019generative} have utilized Generative Adversarial Networks to generate synthetic samples for image datasets, whereas cWGAN \cite{engelmann2021conditional} employed Conditional GANs for generating synthetic data in tabular formats. Moreover, GENDA \cite{troullinou2023generative} utilized AutoEncoders (AE) for the oversampling process enabling their approach to handle imaging and time series data. These works are based on the hypothesis that the GANs and AEs excel at generating intricate, high-dimensional data and can potentially be utilized to generate minority data instances with enhanced semantic alignment. Additionally,  Ando et al. \cite{ando2017deep} proposed a Convolution Neural Network(CNN) based oversampling approach for image classification. Karunasingha et al. \cite{karunasingha2023oc} introduced OC-SMOTE-NN, an adaptive oversampling algorithm that models the SMOTE  algorithm using learnable parameters.

Our work is significantly different from these works. Firstly, GAN and AE are known to have higher training complexity and challenging training requirements such as a higher number of training epochs.  Consequently, applications utilizing those methods could suffer from model overfitting and limited generalization capabilities. Further, GAN and AE related works have limited interpretability as these models are designed to learn compact representations of data in their latent space, which may not directly correspond to interpretable features.  We use multi-layer perceptrons (MLPs) \cite{murtagh1991multilayer}, which are scalable and easy to train. Also, MPLs have higher interpretability due to their direct mapping between input and output features and are known to have the capability to approximate complex functions \cite{kruse2022multi}. Secondly, we propose utilizing a set of learnable discrete decision criteria to govern the oversampling process, which we believe is a novel perspective from all these existing works.

\section{Methodology} \label{sec:methodology}

\subsection{Preliminaries}

 Let $\mathcal{D} = \{(\mathbf{x}_i, y_i)\}_{i=1}^N$ denote a dataset, where each $\mathbf{x}_i \in \mathcal{X}$ represents a feature vector for a data instance with $f$ features (i.e., $\mathbf{x}_i \in \mathbb{R}^{f}$), and $y_i \in \mathcal{Y} = \{1, \ldots, C\}$ represents the corresponding class labels for $C$ classes. 
 
 Given a data instance $\mathbf{x}_i \in \mathcal{X}$ and $k \in \mathbb{Z}_{> 0}$, the $k$-nearest neighbors of $\mathbf{x}_i$ are represented by the set $\mathcal{N}_k(\mathbf{x}_i) \subseteq \mathcal{X} \setminus \{\mathbf{x}_i\}$, with $|\mathcal{N}_k(\mathbf{x}_i)| = k$. For any $x_j \in \mathcal{N}_k(\mathbf{x}_i)$ and $x'_j \in \mathcal{X} \setminus \mathcal{N}_k(\mathbf{x}_i)$, it holds that $y_i = y_j$ and $\textsc{Dist}(\mathbf{x}_i, \mathbf{x}_j) \leq \textsc{Dist}(\mathbf{x}_i, \mathbf{x}'_j)$, where $\textsc{Dist}(\cdot)$ represents the Euclidean distance function.
 
\subsection{Imbalanced Classification Problem}

We define the imbalanced classification problem as follows. A given classification problem is considered imbalanced if there exists at least one class $c \in \{1, \ldots C \}$ such that $|\mathcal{X}_c| \ll \textsc{Max}(|\mathcal{X}_1|, \dots |\mathcal{X}_C|)$, where $\mathcal{X}_c$ denote the set of data instances associated with that label. Typically, the imbalance ratio (IR), $\frac{\textsc{Max}(|\mathcal{X}_1|, \dots |\mathcal{X}_C|)}{\textsc{Min}(|\mathcal{X}_1|, \dots |\mathcal{X}_C|)}$ is used to quantify the level of imbalance. The objective of the imbalanced classification problem is to derive a classifier $f: \mathcal{X} \rightarrow \mathcal{Y}$ that works well with both majority and minority classes. 

\subsection{Oversampling Function}

Next, we delve into the oversampling function. An oversampling function $\sigma$ that takes the original dataset as the input and produces an augmented dataset by increasing the data instances in minority classes can be  formally expressed as follows:
\begin{equation}
    \sigma(\mathcal{D}) = \{\mathcal{D}\} \cup \{\mathcal{D}^{'}_{min}\}
\end{equation}

where $\mathcal{D}^{'}_{min}$ represent the set of synthetically generated  data instances for minority classes. Velayuthan et al. \cite{velayuthan2023revisiting} have generalized the oversampling function to operate at the level of individual data instances, defined as:
\begin{equation}
    \widetilde{x}_i = \textsc{Agg}\bigg( \{x_i\} \cup \mathcal{N}_k(x_i) \bigg)
\end{equation}
where \( \widetilde{x}_i \) represents a generated synthetic data sample derived from the data instance \( x_i \). Here, \( \textsc{Agg}(.) \) denotes an appropriate aggregation function applied to the set \( \{x_i\} \cup \mathcal{N}_k(x_i) \). This function could, for example, perform
operations such as summing or averaging over the set of points.

\subsection{Proposed Oversampling Approach}

Our aim is to design a learnable oversampling function that can generate synthetic data instances for minority classes in a data-driven manner. However, it is not well defined that which characteristics a good oversampling function should have. Therefore, we propose certain properties that a good oversampling function should possess: (1) It should be able to  \textbf{replicate intricate data dependencies} when generating synthetic samples, (2) It should be an \textbf{end-to-end architecture} that can be \textbf{jointly optimized} with the classifier, (3) It should \textbf{offer abundant variability} for oversampling process to ensure diversity among generated synthetic samples, and (4) It should be \textbf{computationally efficient}.

In this work, we propose a learnable framework for oversampling that conforms to the aforementioned properties. Our framework incorporates multiple decision criteria that collectively define the proposed oversampling function. We begin by formally defining  decision criteria.

\begin{definition} [Decision criterion] Let we denote oversampling process as a function $T$ where $\Tilde{x} = T (x)$. $T$ can be seen as a  combination of a set of decision criterions $\{DC_1, DC_2, \dots DC_m\}$ where each $DC_j$ (for $j \in [1, m]$) defines a specific aspect or condition influencing $T$. Formally, $T$ can be written as follows:

\[T(x)=\textsc{Comb}\bigg(DC_1(x), DC_2(x), \dots DC_m(x) \bigg) \]

where $m$ represents the total number of decision criteria and \( \textsc{Comb}(.) \) represents an abstract function that combines the multiples decision criterion to derive the oversampled value.
    
\end{definition}

To formalize how the decision criteria influence the oversampling process, we define the concept of Decision Criteria Mapping as follows:

\begin{definition} \label{def:dcm} [Decision Criteria Mapping] Let $\Phi_j : \mathbb{R}^{f} \rightarrow \mathbb{Z}$ be a function that maps each data instance into a integer value. Let \(\mu_j : \mathbb{Z} \rightarrow S_j\) is a bijective function that maps elements from \(\mathbb{Z}\) to \(S_j\), where \(S_j\) is a set of predefined decisions related to $j^{th}$ decision criteria. Decision Criteria Mapping is a composed function \(\psi_j : \mathbb{R}^f \rightarrow S_j\) formulated as follows:
\[
\psi_j(x) = (\mu_j \circ \Phi_j)(x); \forall x \in \mathcal{X}
\]
\end{definition}

In a nutshell, a decision criteria mapping maps each data instance selected for oversampling into a predefined decision. The collective decisions then guide the oversampling process for the data instance.

\subsection{Choices of Decision Criterias}

One interesting question to ask would be what kind of decision criterias would be important for designing the oversampling process. We identify following decision criterias to be salient to generate synthetic samples. 
\begin{enumerate}
\item \textbf{Oversampling Participation:} Determines whether a specific data instance from a minority class should be included in the oversampling process or not.
\item \textbf{K-Nearest Neighbors:} Specifies the number of nearest neighboring data instances to consider by the aggregation operation.
\item \textbf{Aggregation Function:} Defines the function used to aggregate or summarize the selected data instances.  
\end{enumerate}

The predefined decisions that we use of for these criterias will be discussed in the Section \ref{sec:predecisions}.
\subsubsection{End-to-End Architecture}

Learning decision criteria which are discrete in nature is a non-trivial task since these discrete decisions  involves non-differentiable functions. 
Typically, model training in deep learning involves computing partial derivatives of the loss function w.r.t. model parameters, and then iteratively modifying these model parameters in a way that minimizes the loss. Non-differentiable functions does not have the property of smooth gradients, making it difficult to  compute these partial derivatives, thus precluding the optimal model parameter learning. In order to overcome this challenge, we approximate these discrete decision criteria functions with continuous, differentiable functions , which enables effective back-propagation and subsequent parameter updates.

 We design each decision criteria mapping using MLPs \cite{kruse2022multi}. The main advantage of using MLPs is their ability to approximate any continuous function \cite{hornik1989multilayer}. Given $m$ decision criterias, let $f_{{\theta}_j}(.)$ be a MLP function parameterized by $\theta$ which is associated with $j$th decision criteria. The function \( f_{{\theta}_j}(.) \) maps an input $x \in \mathcal{X}$ to probabilities \( Z_{x,j} = (Z_{x,j}^{1}, Z_{x,j}^{2}, \dots, Z_{x,j}^{n}) \) for \( n \) pre-defined decisions associated with $j$th decision criteria ($j \in [1, m]$) as follows:

\begin{equation}
    Z_{x,j} = f_{{\theta}_j}(x) 
\end{equation}

Then, the decision choice denoted by $y_{x,j}$ for
data instance $x$ under each criteria $j$ is sampled from $Z_{x,j}$ using the Gumbel-Softmax \cite{jang2017categorical}  which leverages the reparametrization trick to ensure differentiability. 
\begin{equation}
    U^i_{x,j} \sim \text{Uniform}(0, 1) ; \forall i \in [1, n]
\end{equation}
\begin{equation}
   g^i{x,j} = -\log(-\log(U^i_{x,j}))  U^i_{x,j} ; \forall i \in [1, n]
\end{equation}
\begin{equation}
    y^i_{x,j} = \frac{\exp\left(\frac{z^i_{x,j} + g^i_{x,j}}{\tau}\right)}{\sum_{k=1}^{n} \exp\left(\frac{z^i_{x,j} + g^i_{x,j}}{\tau}\right)} \quad \text{with} \quad \tau \in (0, \infty) ; \forall i \in [1, n]
\end{equation}

Gumbel softmax approximates a one-hot vector $y_{x,j} = (y^1_{x,j}, y^2_{x,j}, \dots y^n_{x,j})$, and the index of the selected decision for data instance $x$ on the $j^{th}$ decision criteria, denoted by $S_{x, j}$, is determined as:

\begin{equation}
    S_{x, j} = \mu_j(\arg\max_k y^k_{x,j})
\end{equation}

Where $\mu_j(.)$ represents a bijective mapping function ( described in Definition \ref{def:dcm}). After determining the selected decisions from each decision criterion, these decisions are collectively utilized to generate synthetic samples from 
$x$.

Weights in these MLPs  are jointly optimized alongside the learnable weights in the classifier by back-propagation governed by the loss function that is used in the classifier, which is Categorical Cross Entropy loss \cite{chahkoutahi2024influence}. 

\subsubsection{Two Variants}

In this section, we explore two variants in our  learnable framework, namely, $AutoSMOTE_{self}$, and $AutoSMOTE_{cohort}$. 

\begin{figure}
    \centering
    \begin{minipage}[t]{1\textwidth}
        \centering
        \includegraphics[width=\textwidth]{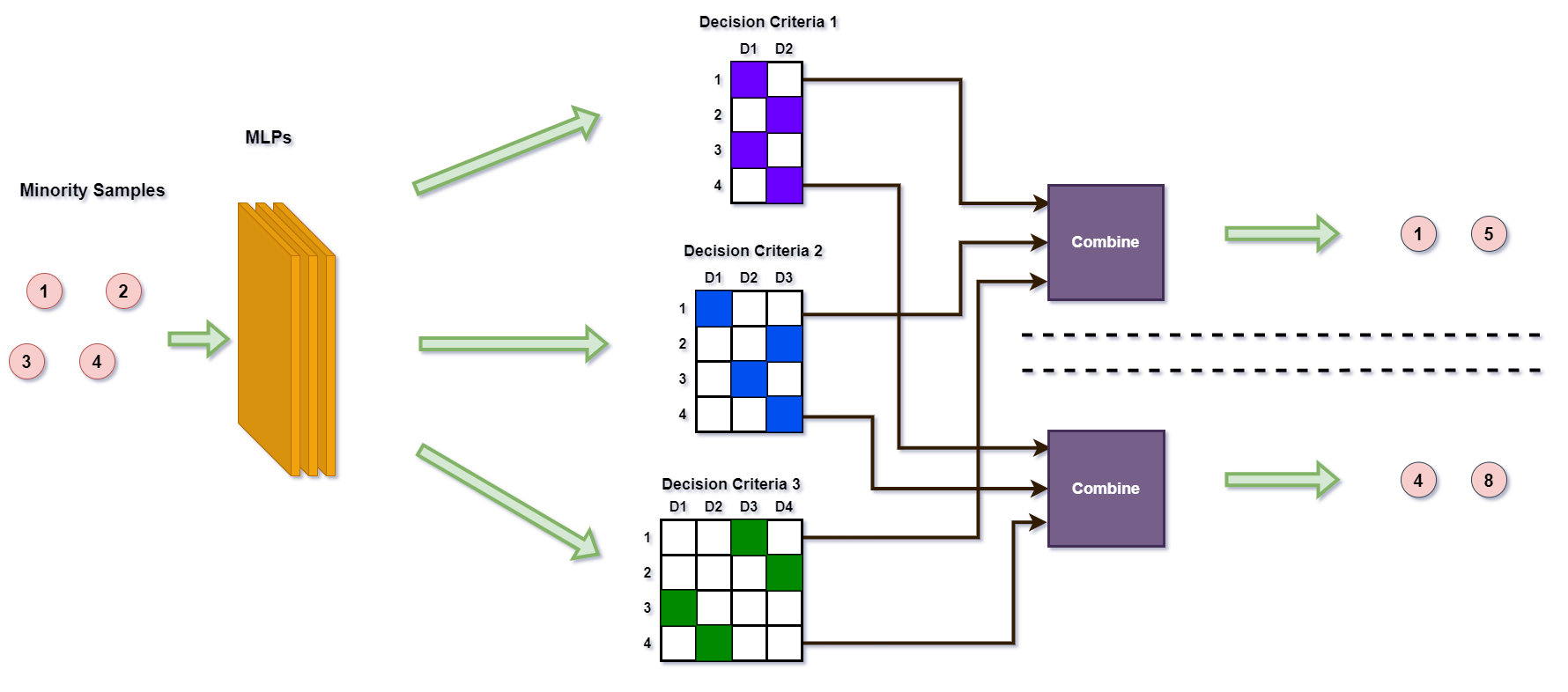}
        \subcaption{AutoSMOTE$_{self}$ }
    \end{minipage}
    \hfill
    \begin{minipage}[t]{1\textwidth}
        \centering
        \includegraphics[width=\textwidth]{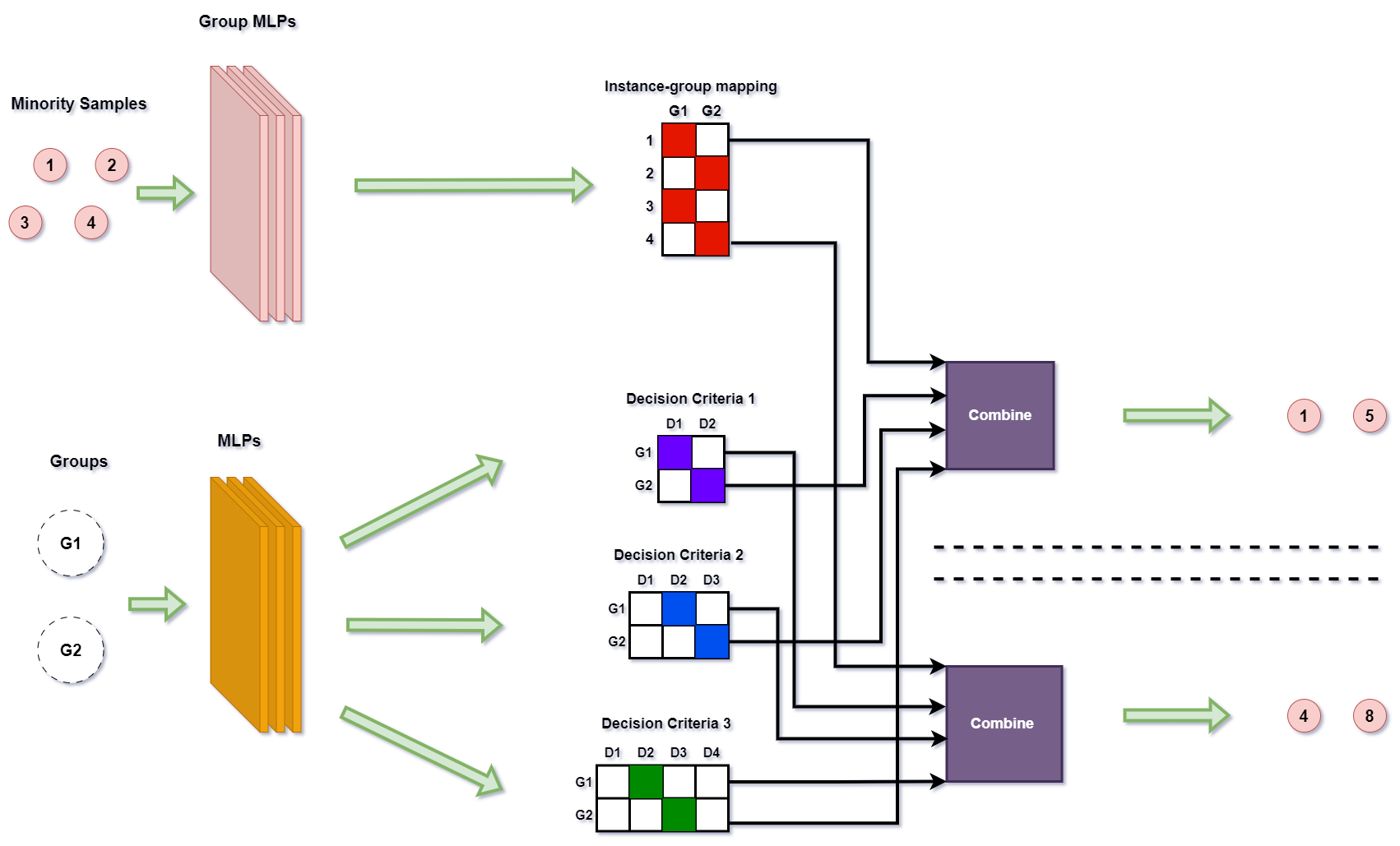}
        \subcaption{AutoSMOTE$_{cohort}$ }
    \end{minipage}
    \caption{AutoSMOTE high-level architecture: (a) AutoSMOTE$_{self}$ predicts decision criteria per each instance, (b) AutoSMOTE$_{cohort}$ considers grouping when predicting decision criteria.}
    \label{fig:architecture}
\end{figure}
\paragraph{$\mathbf{AutoSMOTE_{self}}$}In the first variant, our approach applies the defined decision criteria individually to each minority data instance. This method treats each instance independently, allowing for fine-grained oversampling decisions based on the features of each instance. 

\paragraph{$\mathbf{AutoSMOTE_{cohort}}$}The second variant involves grouping minority class instances and applying the decision criteria to each group collectively. By segmenting the minority class into groups, we aim to capture broader patterns and variations within the data, potentially improving the diversity and effectiveness of synthetic sample generation. Here, we learn the groups for each minority data instance using MLPs. The number of groups is determined as a hyperparameter.

The high-level architecture of these variants are visualized in Figure \ref{fig:architecture}.

\section{Theoretical Analysis} \label{sec:theory}

In this section, we analyze the functional approximation capacity and generalization capabilities of AutoSMOTE.

MLPs are known have the capability of approximating any continuous function as stated by the Universal approximation theorem \cite{hornik1989multilayer,hornik1991approximation} which is deemed to be an ideal choice to model oversampling function. In contrast, AutoSMOTE defines the oversampling function as a composition of predefined decision criterias which interently makes it functional approximation capability limited compared of MLPs. Therefore, one might question the advantage of our approach over more expressive MLPs for this learning task. We seek to answer this question through our theoretical analysis. We start by providing formal definition of universal approximation theorem.

\begin{theorem} (Universal Approximation Theorem)
    For \( G \subset \mathbb{R}^n \),  we define \( R(G) \) as the set of all continuous functions from $G$ to $\mathbb{R}$: $R(G) = \{ f : G \to \mathbb{R} \mid f \text{ is continuous} \}$. Then, for any $f \in R(G)$  and for any \(\epsilon > 0\), there exists a multi-layer perceptron \( \phi \) with a single hidden layer, a finite number of neurons such that:

\[
\sup_{x \in G} | f(x) - \phi(x) | < \epsilon.
\]

\end{theorem}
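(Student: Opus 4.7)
The plan is to follow the classical Cybenko-type argument via Hahn--Banach and the Riesz representation theorem, which is the standard route underlying the cited results of Hornik. First, I would restrict attention to a compact $G \subset \mathbb{R}^n$ (uniform approximation on non-compact sets being unattainable in general) and fix a continuous sigmoidal activation $\sigma$. Denote by $\mathcal{S}$ the set of all finite single-hidden-layer networks
\[
\phi(x) = \sum_{i=1}^{N} \alpha_i\, \sigma(w_i^\top x + b_i), \qquad N \in \mathbb{Z}_{>0},\ \alpha_i, b_i \in \mathbb{R},\ w_i \in \mathbb{R}^n.
\]
Then $\mathcal{S}$ is a linear subspace of $R(G) = C(G)$, and the theorem reduces to showing that $\mathcal{S}$ is dense in $(C(G), \|\cdot\|_\infty)$.

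Second, I would argue by contradiction. Suppose $\overline{\mathcal{S}} \neq C(G)$. Since $\overline{\mathcal{S}}$ is a proper closed subspace, the Hahn--Banach theorem yields a non-zero continuous linear functional $L : C(G) \to \mathbb{R}$ that vanishes on $\overline{\mathcal{S}}$. By the Riesz representation theorem, $L$ is given by integration against a non-zero signed regular Borel measure $\mu$ on $G$, so for every admissible $(w,b)$,
\[
\int_G \sigma(w^\top x + b)\, d\mu(x) = 0.
\]

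Third, the key step---and the one I expect to be the main obstacle---is to deduce from this integral identity that $\mu \equiv 0$, contradicting $L \neq 0$. The standard device is the \emph{discriminatory} property of $\sigma$: for a bounded sigmoidal activation, $\sigma(\lambda(w^\top x + b) + \varphi)$ converges pointwise as $\lambda \to \infty$ to a step function supported on the half-space $\{w^\top x > -b\}$, and dominated convergence then forces $\mu$ to annihilate every closed half-space of $\mathbb{R}^n$. A short Fourier-analytic argument, applied to $\int e^{i\, w^\top x} d\mu(x)$ (recoverable from sums of sigmoidal evaluations), propagates this to show that $\mu$ annihilates every Borel set, yielding $\mu = 0$.

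Finally, for the broader non-polynomial activation setting implicit in the Hornik citation, the sigmoidal half-space argument is replaced by showing that a non-polynomial continuous $\sigma$ allows one to approximate all polynomials (via finite-difference quotients that approximate derivatives of $\sigma$), whereupon the Stone--Weierstrass theorem closes the gap. The Hahn--Banach/Riesz scaffolding is routine; the substantive work lies entirely in establishing the discriminatory (or polynomial-approximation) property of $\sigma$, which is exactly what delimits the admissible class of activation functions for which the theorem holds.
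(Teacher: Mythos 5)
The paper itself offers no proof of this theorem: it is stated as an imported result, with the citations to Hornik's work standing in for the argument, and the text immediately moves on to use it as background for the VC-dimension discussion. So there is no in-paper proof to compare against; the relevant comparison is with the cited literature, and your outline is precisely the classical Cybenko--Hornik functional-analytic route (Hahn--Banach on a proper closed subspace of $C(G)$, Riesz representation to obtain an annihilating signed measure, then the discriminatory property of the activation to force that measure to vanish). As an outline it is sound, and it even repairs two defects in the paper's own statement: you correctly require $G$ to be compact, without which the claim is false (e.g.\ no finite network with bounded activation can uniformly approximate $f(x)=x$ on all of $\mathbb{R}^n$), and you make explicit that the activation must be sigmoidal or at least non-polynomial, a hypothesis the paper omits entirely. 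Two points in your sketch deserve more care if you were to write it out in full: first, the limit $\lambda \to \infty$ of $\sigma\bigl(\lambda(w^\top x + b) + \varphi\bigr)$ is a step function only off the hyperplane $\{w^\top x + b = 0\}$, so the dominated-convergence step must handle the boundary separately (this is why Cybenko carries the extra shift $\varphi$); second, passing from ``$\mu$ annihilates all half-spaces'' to ``$\mu = 0$'' is itself a lemma (one shows the induced functional on bounded measurable functions of $w^\top x$ kills indicators of intervals, hence simple functions, hence $\sin$ and $\cos$, so the Fourier transform of $\mu$ vanishes identically). Neither is a gap in the approach, only in the level of detail; the proposal is a faithful reconstruction of the proof the paper delegates to its references.
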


In a nutshell, MLPs can approximate any continuous function on a compact subset of $\mathbb{R}^n$ to arbitrary accuracy. Consequently, they should be well-suited to effectively model an ideal oversampling function for a given dataset. However, finding such ideal function is non-trivial as this compact subset of functions can be expansive, making the optimization process arduous and vulnerable to overfitting. Moreover, the expansive search space could negatively impact the generalization ability of the model as the model parameters are overly tuned for the training set, leading to suboptimal performance on the testing set and higher testing error. In oder to theoretically analyze this, we use Vapnik–Chervonenkis (VC) dimension \cite{abu1989vapnik} which is a commonly employed metric for analyzing the capacity of machine learning models. 

\begin{definition}[VC Dimension]
Let $\mathcal{H}$ be a hypothesis class of binary functions $h: \Omega \to \{0, 1\}$, where $\Omega \in\mathbb{R}^d$ is a set. The VC dimension $VC_{dim}(\mathcal{H})$ of $\mathcal{H}$ is the supremum of the cardinality of any finite set $S \subseteq \Omega$ that can be shattered (i.e. correctly classified) by $\mathcal{H}$.
\end{definition}

A lower VC dimension naturally refers to a a less complex model with a reduced overfitting risk. With a lower VC dimension, the model is more likely to generalize better to the testing set. We define the generalization bound w.r.t. VC dimension as follows.

\begin{definition} \label{def:ebvc} [Error Bound of VC Dimension] Let $\mathcal{H}$ denote a hypothesis class with VC dimension $VC_{dim}(\mathcal{H})$.  The generalization error bound $\epsilon_{\text{gen}}$  is upper bounded as follows:

\[ \epsilon_{\text{gen}} \leq \epsilon_{\text{train}} + O\bigg( \sqrt{\frac{VC_{dim}(\mathcal{H}) \log(N)}{N}}\bigg) \]

where $\epsilon_{\text{train}}$ is the empirical error  of $\mathcal{H}$, and $N$ is the training sample size. 
\end{definition}

As per the above definition, generalization error bound of the model would generally increase with the VC dimension, potentially leading to decreased testing set performance. Based on this observation, we derive the following theorem.

\begin{theorem}   \label{thm:thmgen} 
Let $\mathcal{H}_{\text{AutoSMOTE}_{cohort}}, \mathcal{H}_{\text{AutoSMOTE}_{self}},$ and $\mathcal{H}_{\text{MLP}}$ denote the hypothesis classes of $AutoSMOTE_{cohort}$, $AutoSMOTE_{self}$, and MLP models, respectively. Assume the following conditions hold:
\begin{enumerate}
    \item The empirical training errors \(\epsilon_{\text{train}}(\mathcal{H}_{\text{AutoSMOTE}_{\text{cohort}}})\), \(\epsilon_{\text{train}}(\mathcal{H}_{\text{AutoSMOTE}_{\text{self}}})\), and \(\epsilon_{\text{train}}(\mathcal{H}_{\text{MLP}})\) are approximately equal.
    \item The number of training samples \(N\) is fixed for all models.
\end{enumerate}

Given \(\epsilon_{\text{gen}}(\mathcal{H})\) as the generalization error of the hypothesis class \(\mathcal{H}\), the following inequality holds:
\[ \epsilon_{\text{gen}}(\mathcal{H}_{\text{AutoSMOTE}_{cohort}}) < \epsilon_{\text{gen}}(\mathcal{H}_{\text{AutoSMOTE}_{self}}) < \epsilon_{\text{gen}}(\mathcal{H}_{\text{MLP}}). \]
\end{theorem}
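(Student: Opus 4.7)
The plan is to reduce the theorem to a comparison of VC dimensions and then argue the ordering by hypothesis-class containment. First, applying Definition \ref{def:ebvc} to each of $\mathcal{H}_{\text{AutoSMOTE}_{cohort}}$, $\mathcal{H}_{\text{AutoSMOTE}_{self}}$, and $\mathcal{H}_{\text{MLP}}$ gives generalization-error bounds of the form $\epsilon_{\text{train}} + O\bigl(\sqrt{VC_{dim}(\mathcal{H})\log(N)/N}\bigr)$. Assumption (1) equates the $\epsilon_{\text{train}}$ terms and Assumption (2) fixes $N$, so the bound becomes a strictly increasing function of $VC_{dim}(\mathcal{H})$. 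The theorem then reduces to establishing
\[
VC_{dim}(\mathcal{H}_{\text{AutoSMOTE}_{cohort}}) \;<\; VC_{dim}(\mathcal{H}_{\text{AutoSMOTE}_{self}}) \;<\; VC_{dim}(\mathcal{H}_{\text{MLP}}).
\]

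For the right-hand inequality I would argue $\mathcal{H}_{\text{AutoSMOTE}_{self}} \subsetneq \mathcal{H}_{\text{MLP}}$. Each decision-criterion mapping $\psi_j = \mu_j \circ \Phi_j$ lands in a finite set $S_j$, so the composite oversampling map $T(x) = \textsc{Comb}(\psi_1(x),\ldots,\psi_m(x))$ is a piecewise-constant selection among at most $\prod_j |S_j|$ outputs per input region, while the downstream classifier is shared by all three architectures. By the Universal Approximation Theorem, a sufficiently wide MLP approximates any such composite (and, more importantly, continuous oversampling maps whose image is not confined to this finite grid), which gives both the containment and its strictness. Monotonicity of VC dimension under hypothesis-class inclusion then delivers the strict VC-dimension gap.

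For the left-hand inequality I would argue $\mathcal{H}_{\text{AutoSMOTE}_{cohort}} \subsetneq \mathcal{H}_{\text{AutoSMOTE}_{self}}$: any cohort-level decision vector $(S_{g,1},\ldots,S_{g,m})$ is replicated by the self variant by letting each MLP $f_{\theta_j}$ output identical Gumbel logits on every instance in the cohort, whereas the self variant can additionally assign distinct decision vectors to instances that the cohort variant is structurally forced to treat identically. The main obstacle lies precisely here: plain set containment only yields $\le$, so to obtain strict inequality I must exhibit, for each pair, a labelled configuration shattered by the larger class but not by the smaller. For the self-versus-MLP step the witness is a set of points whose target oversampled values lie outside the finite grid $\prod_j \mu_j(\mathbb{Z})$; for the cohort-versus-self step the witness is two minority instances that the cohort-assignment MLP is forced to map into the same group but on which the self variant realises opposing decision vectors. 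Producing these explicit witnesses and then reinvoking the bound in Definition \ref{def:ebvc} closes the proof.
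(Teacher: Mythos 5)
Your proposal takes essentially the same route as the paper's own proof: invoke Definition~\ref{def:ebvc}, use assumptions (1) and (2) to reduce the claim to a comparison of VC dimensions, and obtain the VC ordering from the containments $\mathcal{H}_{\text{AutoSMOTE}_{cohort}} \subset \mathcal{H}_{\text{AutoSMOTE}_{self}} \subset \mathcal{H}_{\text{MLP}}$. The difference is only in the level of rigor you demand of yourself, and there you are stricter than the paper: the paper asserts the containments ``from the design,'' declares it ``reasonable to expect'' that the VC dimensions are strictly ordered, and passes to the conclusion via the approximation $\epsilon_{\text{gen}}(\mathcal{H}) \propto \text{VC}_{\text{dim}}(\mathcal{H})$; it never constructs the shattering witnesses that you correctly identify as necessary to upgrade set inclusion to strict VC inequality, so the ``main obstacle'' you name is one the paper simply steps over rather than resolves. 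Be aware, though, that your plan inherits the one weakness it does not flag: Definition~\ref{def:ebvc} only \emph{upper-bounds} the generalization error, so even a strict ordering of VC dimensions orders the bounds, not the errors $\epsilon_{\text{gen}}$ themselves; your phrase ``the theorem then reduces to'' the VC comparison makes the same unjustified identification of bound with error that the paper makes with its proportionality approximation. In short: same approach, with your version the more careful of the two at the strictness step, and both arguments heuristic at the bound-versus-error step.
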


\begin{proof}
    From the design, we know that $\mathcal{H}_{\text{AutoSMOTE}_{cohort}} \subset \mathcal{H}_{\text{AutoSMOTE}_{self}} \subset \mathcal{H}_{\text{MLP}}$.  Therefore, it is reasonable to expect that their VC-dimensions would follow the same order: $\text{VC}_{dim}(\mathcal{H}_{\text{AutoSMOTE}_{cohort}}) < \text{VC}_{dim}(\mathcal{H}_{\text{AutoSMOTE}_{self}}) < \text{VC}_{dim}(\mathcal{H}_{\text{MLP}})$. 
    
    Now, let's focus on Definition \ref{def:ebvc} which is on the generalization error bound of VC-dimension. Under the given assumptions, we can approximate the following relationship:
    \[
\epsilon_{\text{gen}}(\mathcal{H}) \propto \text{VC}_{\text{dim}}(\mathcal{H})
\]

This approximation would leads to the inequality: $\epsilon_{\text{gen}}(\mathcal{H}_{\text{AutoSMOTE}_{cohort}}) < \epsilon_{\text{gen}}(\mathcal{H}_{\text{AutoSMOTE}_{self}}) < \epsilon_{\text{gen}}(\mathcal{H}_{\text{MLP}})$. This completes the proof.
\end{proof}

As established by the above theorem,  AutoSMOTE variants exhibit comparatively lower generalization error bounds w.r.t. MLPs, leading to better performance on testing data. Moreover, we empirically validate these theoretical findings in Section \ref{sec:generalization_perfformance}.

\section{Experimental Design} \label{sec:expriment}

Through our experimental analysis, we seek to answer the following research questions.

\begin{itemize}
    \item \textbf{RQ1: }How well AutoSMOTE variants outperform state-of-the-art oversampling algorithms in the imbalanced classification task?
    \item \textbf{RQ2: }What are the generalization capabilities of AutoSMOTE variants?
    \item \textbf{RQ3: }How do different decision criterias contribute to AutoSMOTE’s overall performance?
    \item \textbf{R4: }How does the training efficiency of AutoSMOTE variants compare with existing oversampling algorithms?
\end{itemize}

\subsection{Datasets}
We employ 8 datasets related for diverse fields in our experiments. They are summarized as follows:

\begin{itemize}
    \item Diabetes \cite{smith1988using} - A binary dataset based on diagnostic measurements of diabetes. It consists of 768 data instances with 8 features each.
    \item Page-blocks \cite{fernandez2008study} - A multi-class dataset related to document analysis. It has of 5473 data instances with 10 features each.
    \item Glass \cite{dong2011new} - A multi-class dataset on identification of different glass types. It consists of 214 data instances with 9 features each.
    \item Wisconsin \cite{street1993nuclear} - A binary dataset about patient traits associated with breast cancer. It consists of 569 data instances with 30 features each.
    \item Thyroid \cite{pang2019deep} - A binary dataset that contains data in relation to thyroid disorders. It consists of 7200 data instances with 21 features each.
    \item  Kc1 \cite{menzies2004assessing} - A binary dataset on software defect prediction. It has 2109 data instance with 21 features each.
    \item Yeast \cite{read2011classifier} - A multi-class dataset related to microbiology. It contains 2417 data instances with 103 attributes each.
    \item Ads \cite{das2018sparse} - A binary dataset on predicting possible internet advertisements. It contains 3279 data instances with 1558 attributes each.
\end{itemize}

\subsection{Experimental Setups and Baselines}

We follow a standard experimental setup commonly used in machine learning experiments. It includes a data prepossessing step used to normalize dataset features and fill dummy values for missing features. We employ a stratified training testing split under multiple model executions and record average and standard deviation of the evaluation metrics. 

For the baselines for imbalanced classification task, we compare our model with both traditional and deep learning-based oversampling methods. These traditional methods include SMOTE \cite{chawla2002smote}, SVMSMOTE \cite{tang2008svms}, BorderlineSMOTE \cite{han2005borderline}, ADASYN \cite{he2008adasyn}, and SMOTE-N \cite{chawla2002smote}. For deep learning approaches, we employ oversamplers based on MLPs, GANs and Variational AEs.

\subsection{Evaluation Metrics}

We use precision, recall and F1-score \cite{wardhani2019cross} to evaluate our models and baselines. The equations for these metrics are as follows:

\[ \text{Precision} = \frac{TP}{TP + FP}
 \]
\[ \text{Recall} = \frac{TP}{TP + FN} \]
\[ \text{F1 Score} = 2 \times \frac{\text{Precision} \times\text{Recall}}{\text{Precision} + \text{Recall}}
 \]

where TP, FP, and FN refers to true positive, false positive, and false negative counts, respectively.

\subsection{Model Hyper-parameters}

We perform training testing split with 80:20 ratio where we run each model 10 times under different seeds. We use a MLP with 1 hidden layer consisting 64 neurons as our classifier.  Further, we employ 0.05 learning rate with Adam optimizer \cite{kingma2014adam} for model training. We train our models and baselines for 200 epochs, with the exception of GAN and VAE based oversamplers, which are trained for 10,000 epochs. This extended training duration is necessary to achieve comparable performance for these models due to their inherent complexities and nature.

Batch size for each dataset is critical for model training due to different imbalanced ratios in these datasets. We select an appropriate batch size from \{500, 2500, 5000\} for each dataset which provides an adequate number of minority samples for the oversampling process. We also observe that the classification performance of oversampling algorithms heavily depends on the number of selected neighbors. Therefore, we run each model across 2 to 6 nearest neighbors and record the best performance metric, enabling fair comparison. Further, $AutoSMOTE_{cohort}$ requires determining the optimal number of groups into which minority samples should be grouped. In order to do that, we perform a search across 1 to 7 groups and record the best result.

\subsection{Pre-defined Decisions of AutoSMOTE} \label{sec:predecisions}

AutoSMOTE requires set of pre-defined decisions under each decision criteria. For Oversampling Participation, we would learn a binary variable for each minority instance that would determine whether it would be utilized to generate synthetic samples (1) or not (0). If a minority instance is decided to be not utilized (0), then there will be no synthetic sample generated using that instance.
 We learn a number between 1 to 6 for each minority instance which is then employed as the number of nearest neighbors for that instance in the oversampling process. For the aggregation function, we select a function for each minority instance in a learnable manner. The set of functions available for selection include linear interpolation (used in SMOTE), minimum, maximum, sum, average, and weighted average functions. The implementation aspects of these functions are provided in Appendix \ref{sec:agg}.

\subsection{System Resources and Implementation Details} 

We use following libraries and programming languages for our implementation: Python\cite{van2007python} , Scikit-learn \cite{bisong2019introduction}, Pytorch \cite{imambi2021pytorch}, Pandas \cite{mckinney2015pandas}, and Numpy \cite{bressert2012scipy}. All the experiments are executed on a computer with Core i7 CPU, 16 GB RAM and 8GB GPU.

\section{Results and Discussion}\label{sec:result}
\subsubsection{Imbalanced Classification Performance (RQ1)}

In this section, we compare the classification performance of our model with the baselines. Table \ref{tab:classification} depicts the results for each dataset under each evaluation metric. 

\renewcommand{\arraystretch}{.7}
\setlength{\tabcolsep}{1pt}

\begin{longtable}{cc|ccc}

\caption{Performance comparison of oversampling methods. Standard deviation for each metric is given in brackets. The best results are highlighted in \textbf{bold}. Note: Results for some datasets under the ADASYN method are not reported due to errors encountered due to insufficient minority samples.} \label{tab:comparison} \\

\toprule
\textbf{Dataset} & \textbf{Method} & \textbf{Precision}&\textbf{Recall}&\textbf{F1-score} \\
\midrule
\endfirsthead

\multicolumn{3}{c}%
{{\bfseries Table \thetable\ Continued from previous page}} \\
\toprule
\textbf{Dataset} & \textbf{Method} & \textbf{Precision} & \textbf{Recall} & \textbf{F1-score}\\
\midrule
\endhead

\midrule \multicolumn{3}{r}{{Continued on next page}} \\
\endfoot

\bottomrule
\endlastfoot

\multirow{3}{*}{Diabetes}
  & SMOTE & 73.21 ($\pm$3.91)&74.24 ($\pm$4.21)&72.78 ($\pm$4.36)  \\
  & SVMSMOTE & 73.50 ($\pm$3.24)&\textbf{74.45 ($\pm$3.42)}&\textbf{72.82 ($\pm$3.52)}\\
  & BorderlineSMOTE &72.88 ($\pm$3.60)&74.06 ($\pm$3.76)&72.21 ($\pm$4.10)\\
  & ADASYN & -& -& -  \\
  & SMOTE-N &70.32 ($\pm$4.39)&71.37 ($\pm$4.65)&69.50 ($\pm$4.98) \\
  & MLP-Oversampler &73.82 ($\pm$5.06)&71.72 ($\pm$4.71)&72.03 ($\pm$4.50) \\
  & VAE-Oversampler &72.73 ($\pm4.65$)&72.14 ($\pm4.69$)& 71.98 ($\pm4.56$) \\
  & GAN-Oversampler & 72.59($\pm4.03$)&71.94($\pm3.61$)& 71.98($\pm3.73$) \\
  & AutoSMOTE$_{Self}$ & 73.42 ($\pm$5.01)&71.64 ($\pm$4.87)&72.10 ($\pm$4.76) \\
  & AutoSMOTE$_{cohort}$ & \textbf{74.12 ($\pm$5.01)}&72.53 ($\pm$4.87)&72.74 ($\pm$4.76) \\

\midrule
\multirow{3}{*}{Page-blocks}
  & SMOTE &54.78 ($\pm$2.18)&\textbf{91.53 ($\pm$1.69)}&64.53 ($\pm$2.48) \\
  & SVMSMOTE & 55.26 ($\pm$2.97)&90.93 ($\pm$2.11)&65.22 ($\pm$3.26) \\
  & BorderlineSMOTE &51.86 ($\pm$2.73)&90.81 ($\pm$2.26)&61.81 ($\pm$3.12)\\
  & ADASYN &71.63 ($\pm$2.96)&68.81 ($\pm$6.16)&59.57 ($\pm$6.92)  \\
  & SMOTE-N &55.98 ($\pm$2.62)&88.15 ($\pm$3.46)&65.3 ($\pm$2.57) \\
  & MLP-Oversampler &59.89 ($\pm$20.00)&38.49 ($\pm$6.04)&42.41 ($\pm$8.60) \\
  & VAE-Oversampler & 82.13 ($\pm3.62$)&66.27 ($\pm4.82$)& 71.48 ($\pm4.32$) \\
  & GAN-Oversampler & \textbf{82.93($\pm4.23$)}&69.44($\pm6.42$)& \textbf{74.00($\pm5.97$)} \\
  &  AutoSMOTE$_{Self}$ &81.25 ($\pm$11.38)&58.84 ($\pm$20.42)&65.17 ($\pm$12.11)\\
  & AutoSMOTE$_{cohort}$ &60.61 ($\pm$11.38)&72.42 ($\pm$20.42)&60.12 ($\pm$12.11)\\
\midrule
\multirow{3}{*}{Glass}
  & SMOTE & 68.49 ($\pm$8.55)&\textbf{69.64 ($\pm$8.08)}&66.88 ($\pm$8.34) \\
  & SVMSMOTE &69.23 ($\pm$10.04)&66.30 ($\pm$6.99)&65.59 ($\pm$8.34)  \\
  & BorderlineSMOTE &68.90 ($\pm$10.03)&66.67 ($\pm$10.67)&65.50 ($\pm$10.88) \\
  & ADASYN &65.13 ($\pm$8.71)&64.63 ($\pm$6.47)&63.40 ($\pm$7.30) \\
  & SMOTE-N &70.25 ($\pm$11.15)&66.03 ($\pm$10.64)&65.82 ($\pm$10.92) \\
  & MLP-Oversampler &58.12 ($\pm$10.55)&57.60 ($\pm$10.18)&56.60 ($\pm$10.00) \\
  & VAE-Oversampler & 58.33($\pm11.50$)&57.48($\pm11.49$)&56.49($\pm11.38$) \\
  & GAN-Oversampler & 63.78($\pm9.28$)&60.33($\pm9.55$)& 60.17($\pm9.42$) \\
  & AutoSMOTE$_{Self}$ &67.29 ($\pm$11.68)&62.99 ($\pm$9.89)&62.89 ($\pm$10.22) \\
  &  AutoSMOTE$_{cohort}$ &\textbf{70.95 ($\pm$11.68)}&68.58 ($\pm$9.89)&\textbf{67.87 ($\pm$10.22)}\\
\midrule
\multirow{3}{*}{Wisconsin}
  & SMOTE & 97.84 ($\pm$1.02)&97.86 ($\pm$0.88)&97.83 ($\pm$0.89)  \\
  & SVMSMOTE &97.08 ($\pm$1.38)&97.59 ($\pm$1.01)&97.29 ($\pm$1.19)  \\
  & BorderlineSMOTE &96.51 ($\pm$1.22)&97.24 ($\pm$0.98)&96.83 ($\pm$1.09) \\
  & ADASYN &96.25 ($\pm$1.79)&97.03 ($\pm$1.15)&96.56 ($\pm$1.50) \\
  & SMOTE-N &96.40 ($\pm$1.56)&96.95 ($\pm$1.27)&96.63 ($\pm$1.40) \\
  & MLP-Oversampler &\textbf{98.00 ($\pm$0.93)}&\textbf{97.88 ($\pm$0.91)}&\textbf{97.92 ($\pm$0.87)} \\
  & VAE-Oversampler & 97.14 ($\pm1.32$)&97.15 ($\pm0.71$)& 97.09 ($\pm0.90$) \\
  & GAN-Oversampler & 97.89 ($\pm1.11$)& 97.81($\pm0.98$)& 97.83 ($\pm0.99$) \\
  & AutoSMOTE$_{Self}$ &\textbf{98.00 ($\pm$0.93)}&\textbf{97.88 ($\pm$0.91)}&\textbf{97.92 ($\pm$0.87)}\\
  & AutoSMOTE$_{cohort}$ &\textbf{98.00 ($\pm$0.93)}&\textbf{97.88 ($\pm$0.91)}&\textbf{97.92 ($\pm$0.87)} \\
\midrule
\multirow{3}{*}{Thyroid}
  & SMOTE &86.41 ($\pm$4.98)&94.29 ($\pm$3.36)&\textbf{89.45 ($\pm$2.88) } \\
  & SVMSMOTE &82.85 ($\pm$3.42)&88.06 ($\pm$6.17)&84.61 ($\pm$2.09)  \\
  & BorderlineSMOTE &84.23 ($\pm$4.62)&93.38 ($\pm$4.18)&87.63 ($\pm$2.64)\\
  & ADASYN &85.21 ($\pm$7.85)&\textbf{95.55 ($\pm$2.10)}&88.79 ($\pm$7.49) \\
  & SMOTE-N &74.87 ($\pm$3.43)&86.16 ($\pm$3.08)&78.95 ($\pm$3.11)\\
  & MLP-Oversampler &91.56 ($\pm$1.15)&74.6 ($\pm$4.13)&80.34 ($\pm$3.00)\\
  & VAE-Oversampler & \textbf{93.08 ($\pm0.94$)}&79.71 ($\pm4.26$)& 84.78 ($\pm3.01$) \\
  & GAN-Oversampler & 87.21($\pm9.05$)&76.81($\pm6.13$)& 80.34($\pm6.53$) \\
  & AutoSMOTE$_{Self}$ &92.03 ($\pm$8.78)&85.77 ($\pm$6.53)&88.26 ($\pm$5.12)\\
  &AutoSMOTE$_{cohort}$ &83.43 ($\pm$8.78)&91.10 ($\pm$6.53)&84.93 ($\pm$5.12) \\
\midrule
\multirow{3}{*}{Kc1}
  & SMOTE & 62.00 ($\pm$1.87)&\textbf{71.60 ($\pm$3.07)}&60.90 ($\pm$2.71) \\
  & SVMSMOTE &63.54 ($\pm$3.02)&70.28 ($\pm$3.43)&\textbf{64.50 ($\pm$3.36)}\\
  & BorderlineSMOTE &61.91 ($\pm$1.76)&71.29 ($\pm$2.70)&61.07 ($\pm$2.75) \\
  & ADASYN &61.49 ($\pm$1.79)&71.58 ($\pm$3.24)&58.81 ($\pm$2.73)  \\
  & SMOTE-N & 60.71 ($\pm$3.48)&63.59 ($\pm$4.48)&61.21 ($\pm$3.47) \\
  & MLP-Oversampler &\textbf{78.82 ($\pm$4.49)}&60.12 ($\pm$2.19)&62.83 ($\pm$2.91)\\
  & VAE-Oversampler & 76.23 ($\pm6.39$)&59.55 ($\pm2.90$)& 61.95 ($\pm3.80$) \\
  & GAN-Oversampler & 72.30($\pm7.30$)&59.33($\pm2.54$)& 61.46($\pm3.50$) \\
  & AutoSMOTE$_{Self}$ & 61.65 ($\pm$5.19)&63.46 ($\pm$6.74)&62.32 ($\pm$4.19) \\
  & AutoSMOTE$_{cohort}$ & 63.56 ($\pm$5.19)&67.18 ($\pm$6.74)&62.71 ($\pm$4.19)\\
\midrule
\multirow{3}{*}{Yeast}
  & SMOTE & 50.36 ($\pm$5.15)&57.05 ($\pm$4.33)&50.74 ($\pm$4.09)  \\
  & SVMSMOTE &50.61 ($\pm$5.57)&56.83 ($\pm$4.23)&51.10 ($\pm$4.26)  \\
  & BorderlineSMOTE &50.70 ($\pm$4.83)&55.76 ($\pm$4.50)&49.59 ($\pm$3.98) \\
  & ADASYN & -& -& -  \\
  & SMOTE-N &47.25 ($\pm$3.91)&55.54 ($\pm$4.17)&48.62 ($\pm$3.81) \\
  & MLP-Oversampler &54.96 ($\pm$5.93)&51.73 ($\pm$7.08)&51.78 ($\pm$6.57) \\
  & VAE-Oversampler & \textbf{76.23 ($\pm6.39$)}& \textbf{59.55 ($\pm2.90$)}& \textbf{61.95 ($\pm3.80$)} \\
  & GAN-Oversampler & 56.27($\pm6.62$)&54.47($\pm4.60$)& 53.60($\pm5.02$) \\
  & AutoSMOTE$_{Self}$ &52.28 ($\pm$8.11)&55.68 ($\pm$5.06)&52.08 ($\pm$5.39) \\
  & AutoSMOTE$_{cohort}$ &56.59 ($\pm$8.11)&53.81 ($\pm$5.06)&52.65 ($\pm$5.39)\\
\midrule
\multirow{3}{*}{Ads}
  & SMOTE & 89.29 ($\pm$2.02)&92.87 ($\pm$1.43)&90.93 ($\pm$1.57)  \\
  & SVMSMOTE & 90.09 ($\pm$2.28)&93.04 ($\pm$1.06)&91.45 ($\pm$1.58) \\
  & BorderlineSMOTE &89.77 ($\pm$2.33)&\textbf{93.13 ($\pm$1.37)}&91.30 ($\pm$1.70)\\
  & ADASYN &88.28 ($\pm$2.17)&93.08 ($\pm$1.21)&90.41 ($\pm$1.54)\\
  & SMOTE-N &79.45 ($\pm$1.30)&88.92 ($\pm$1.44)&83.00 ($\pm$1.31) \\
  & MLP-Oversampler &\textbf{93.74 ($\pm$1.48)}&92.91 ($\pm$1.60)&\textbf{93.30 ($\pm$1.33)} \\
  & VAE-Oversampler & 92.57 ($\pm1.41$)&92.39 ($\pm1.83$)& 92.46 ($\pm1.43$) \\
  & GAN-Oversampler & 92.99($\pm1.34$)&92.57($\pm1.70$)& 92.75($\pm1.28$) \\
  & AutoSMOTE$_{Self}$ & 90.19 ($\pm$1.77)&92.67 ($\pm$1.97)&91.33 ($\pm$1.48)\\
  &AutoSMOTE$_{cohort}$ &91.16 ($\pm$1.77)&92.31 ($\pm$1.97)&91.68 ($\pm$1.48)\\
  \label{tab:classification}
\end{longtable}

In Table \ref{tab:ranking}, we provide the average ranking of these models in order the compare their overall performance . Note that a lower rank signifies better performance. 

\begin{table}[h!]
    \centering
    \caption{Average ranking across all datasets for each metric, alongside the overall ranking. The best results are highlighted in \textbf{bold}.}
    \begin{tabular}{lcccc}
        \toprule
        \textbf{Method} & \textbf{Precision    } & \textbf{Recall    } & \textbf{F1-score    } & \textbf{Overall Rank} \\
        \midrule
        SMOTE & 6.4 & \textbf{2.3} & 4.9 & 4.5\\
        SVMSMOTE & 6.0  & 3.5 & 4.3 & 4.6\\
        BorderlineSMOTE & 6.9 & 3.4 & 6.4 & 5.5 \\
        ADASYN & 7.5 & 4.5 & 7.7 & 6.6 \\
        SMOTE-N & 8.3 & 7.0 & 7.5 & 7.6    \\
        MLP-Oversampler & 3.5 & 7.3 & 5.3 & 5.3\\
        VAE-Oversampler & 3.9 & 7.1 & 5.1 & 5.4\\
        GAN-Oversampler  & 4.1 & 7.3& 4.8 & 5.4\\
        AutoSMOTE$_{self}$ & 4.3 & 6.3 & 4.4 & 5.0\\
        AutoSMOTE$_{cohort}$ & \textbf{3.3} & 4.6 & \textbf{3.5} & \textbf{3.8}\\
        \bottomrule
    \end{tabular}
    \label{tab:ranking}
\end{table}

According to results, AutoSMOTE$_{cohort}$ emerges the best performing model by achieving the best overall rank. It also achieves the top rank in Precision and F1-score metrics, and performs well in Recall. AutoSMOTE$_{self}$ also depicts competitive performance by achieving the second best overall ranking in deep learning-based oversampling methods.

\subsection{Generalization Capabilities of our Variants (RQ2)} \label{sec:generalization_perfformance}

In this section, we empirically analyze the testing set generalizability of AutoSMOTE variants w.r.t. MLP. Figure \ref{fig:training_error} shows the training error of AutoSMOTE variants alongside the MLP model for Glass and Page-blocks datasets. MLP model gradually reaches to zero training error, which typically indicates overfitting, suggesting a higher VC dimension as it tries to perfectly fit onto the training data. On the other hand, AutoSMOTE variants reach to moderate levels of training error suggesting that they are making balanced progress in learning from the training data without overfitting. This also aligns with the expected VC dimensions of our variants.

\begin{figure}[H]
    \centering
    \begin{subfigure}{1\textwidth}  
        \centering
        \includegraphics[width=\linewidth]{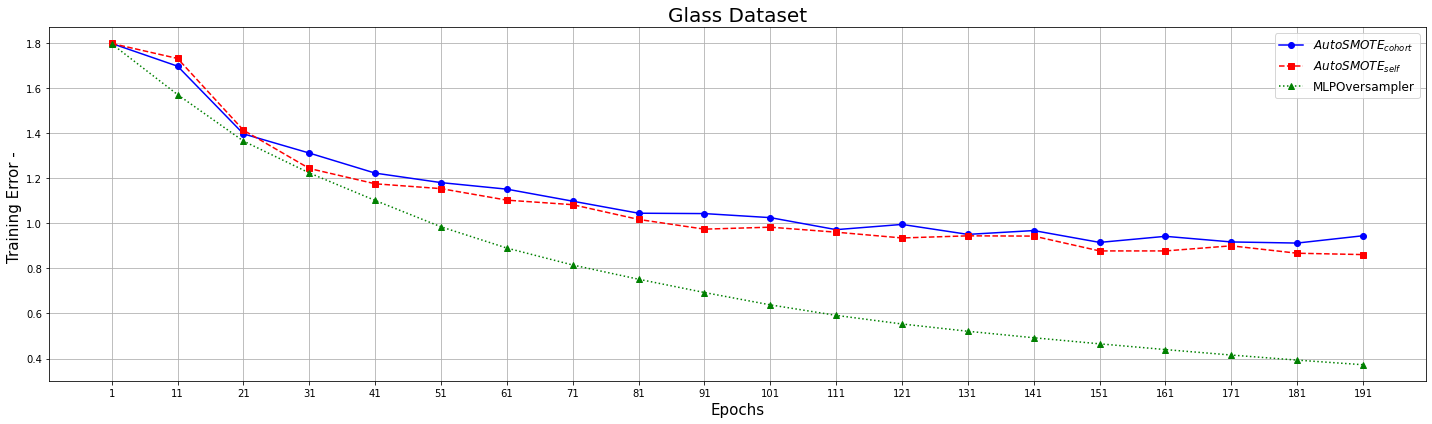}
    \end{subfigure}
    \hfill
    \begin{subfigure}{1\textwidth}  
        \centering
        \includegraphics[width=\linewidth]{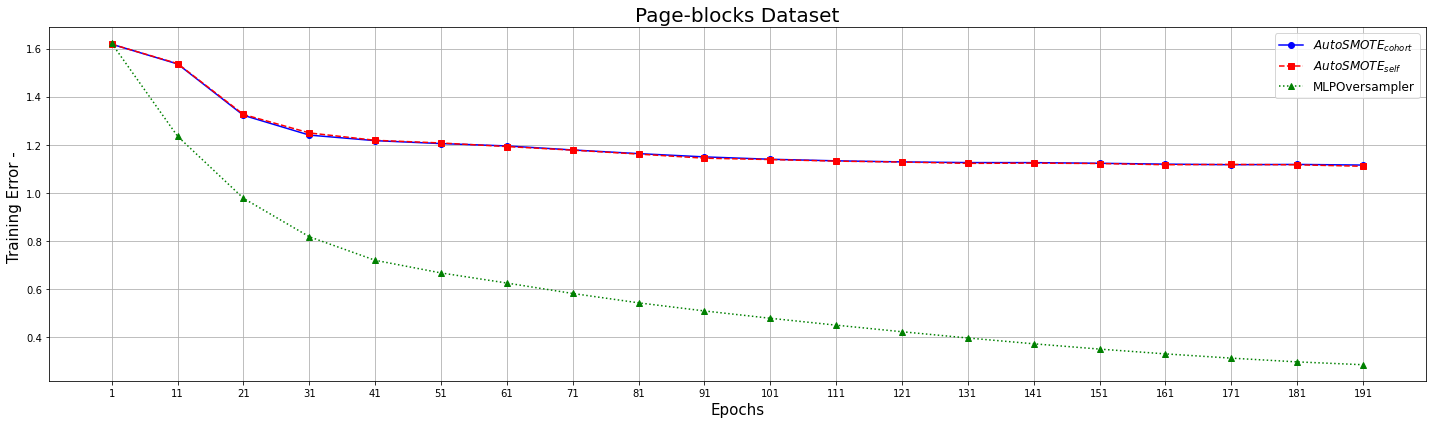}
    \end{subfigure}
    
    \caption{Training Error Comparison of AutoSMOTE with MLP-Oversampler}
    \label{fig:training_error}
\end{figure}

Figure \ref{fig:test_performance} depicts the test set performance of these three models. AutoSMOTE variants consistently outperform the MLP model as they have higher generalization capabilities. Also, $AutoSMOTE_{cohort}$ performs comparatively better than $AutoSMOTE_{self}$, aligning with their VC dimensions and generalization error bounds. This validates the applicability of Theorem \ref{thm:thmgen} in real-world datasets.

\begin{figure}[H]
    \centering
    \begin{minipage}[b]{0.48\textwidth} 
        \centering
        \includegraphics[width=\textwidth]{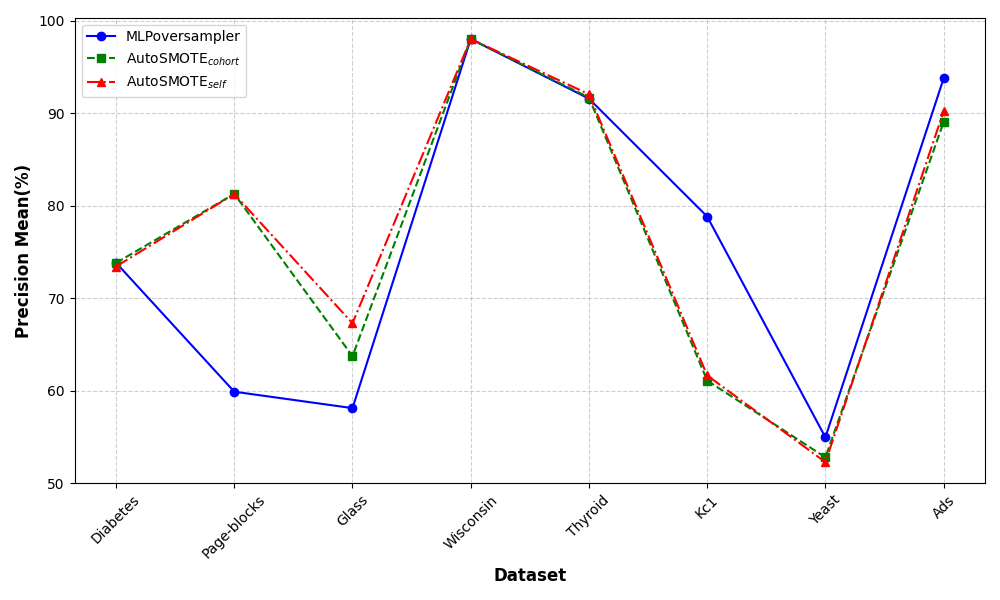}
    \end{minipage}
    \hfill
    \begin{minipage}[b]{0.48\textwidth} 
        \centering
        \includegraphics[width=\textwidth]{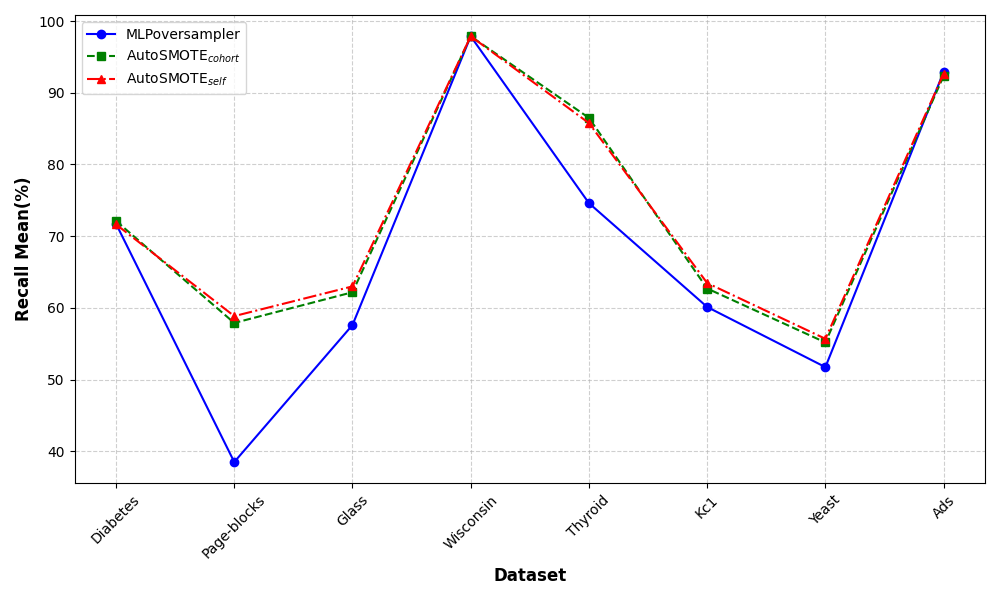}
    \end{minipage}
    
    \vskip 10pt 
    
    \begin{minipage}[b]{0.6\textwidth} 
        \centering
        \includegraphics[width=\textwidth]{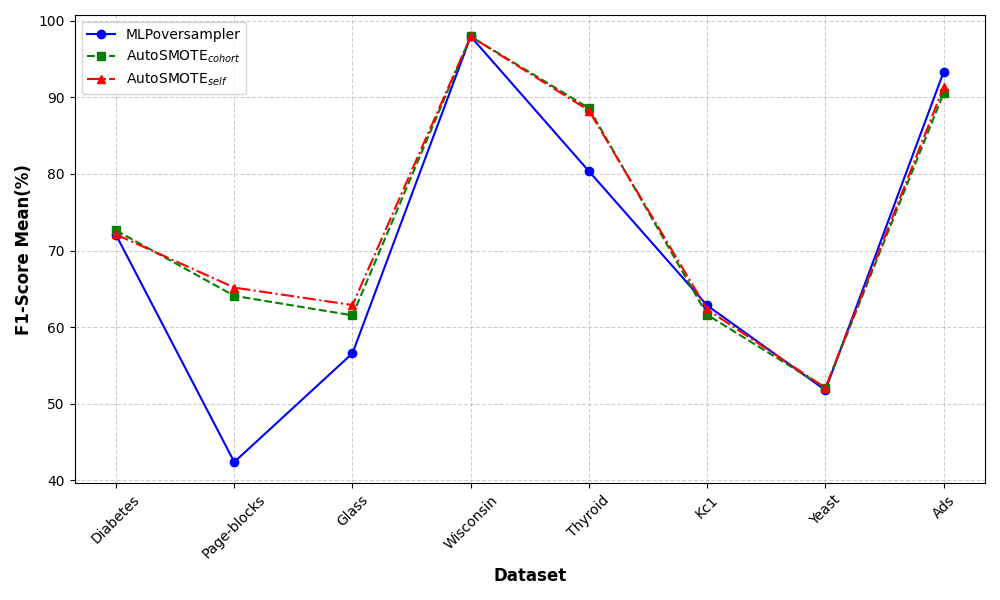}
    \end{minipage}
    
    \caption{Testing Performance Comparison of AutoSMOTE with MLP-Oversampler.}
    \label{fig:test_performance}
\end{figure}

\subsection{Ablation Study (RQ3)}

In this experiment, we analyze the impact of each decision criteria on AutoSMOTE's performance. To demonstrate this we derive three sub variants of our model by eliminating decision criterion in turn.
\begin{itemize}
  \item \textbf{AutoSMOTE Without DC1 : } We exclude the oversampling participation criteria. In this variant, all minority samples are employed in the oversampling process.
  \item  \textbf{AutoSMOTE Without DC2 : } We exclude the learnable k-nearest neighbor number selection. In this variant, the number of k-nearest neighbors will be selected as per the model hyper-parameter. 
  \item  \textbf{AutoSMOTE Without DC3 : } We exclude the learnable aggregation function selection. In this variant, all minority samples would use linear interpolation function (used in SMOTE) as their aggregation mechanism.
\end{itemize}

  \begin{table}[H]
    \centering
    \caption{Ablation study for decision criteria}
    \begin{tabular}{llcc}
        \toprule
        & & $\mathbf{AutoSMOTE_{self}}$ & $\mathbf{AutoSMOTE_{cohort}}$ \\
        \midrule
        & Baseline & 62.89 (±10.22) &  67.87 (±10.22) \\
        \multirow{3}{*}{Glass} & Without DC1 & 66.28 (±10.94) & 64.90 (±10.44) \\
                               & Without DC2 & 60.96 (±10.29) & 60.08 (±11.99) \\
                               & Without DC3 & 61.25 (±10.19) & 60.64 (±9.85) \\
        \midrule
        & Baseline & 72.10 (±4.76) & 72.74 (±4.76) \\
        \multirow{3}{*}{Diabetes} & Without DC1 & 71.73 (±4.86) & 71.45 (±3.61) \\
                                  & Without DC2 & 71.31 (±3.72) & 71.27 (±4.06) \\
                                  & Without DC3 & 71.71 (±3.51) & 71.64 (±4.02) \\
        \midrule
        & Baseline & 52.08 (±5.39) & 52.65 (±5.39) \\
        \multirow{3}{*}{Yeast} & Without DC1 & 48.28 (±3.57) & 48.27 (±4.22) \\
                               & Without DC2 & 52.10 (±4.57) & 52.56 (±4.79) \\
                               & Without DC3 & 52.71 (±4.61) & 51.94 (±5.75) \\
        \bottomrule
    \end{tabular}
    \label{table:ablation}
\end{table}

  Table \ref{table:ablation} portray the results
of our sub variants compared alongside the baseline. We can see that different datasets different dependencies on different decision criteria. For example, Yeast dataset shows significant performance decline when the oversampling participation is excluded, but shows negligible impact with the removal of other two criteria.

\subsection{Runtime Efficiency Comparision (RQ4)}

\begin{figure}[H]
     \centering
     \begin{subfigure}{0.8\textwidth} 
         \centering
         \includegraphics[width=\textwidth]{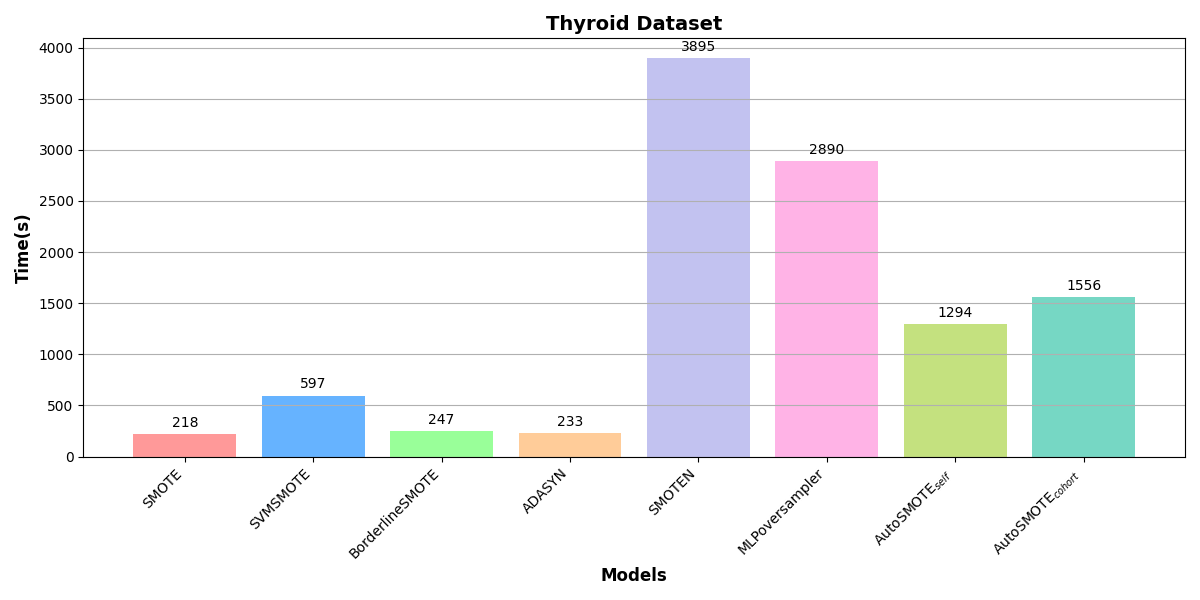}
     \end{subfigure}
     \hspace{0.02\textwidth} 
     \begin{subfigure}[b]{0.8\textwidth} 
         \centering
         \includegraphics[width=\textwidth]{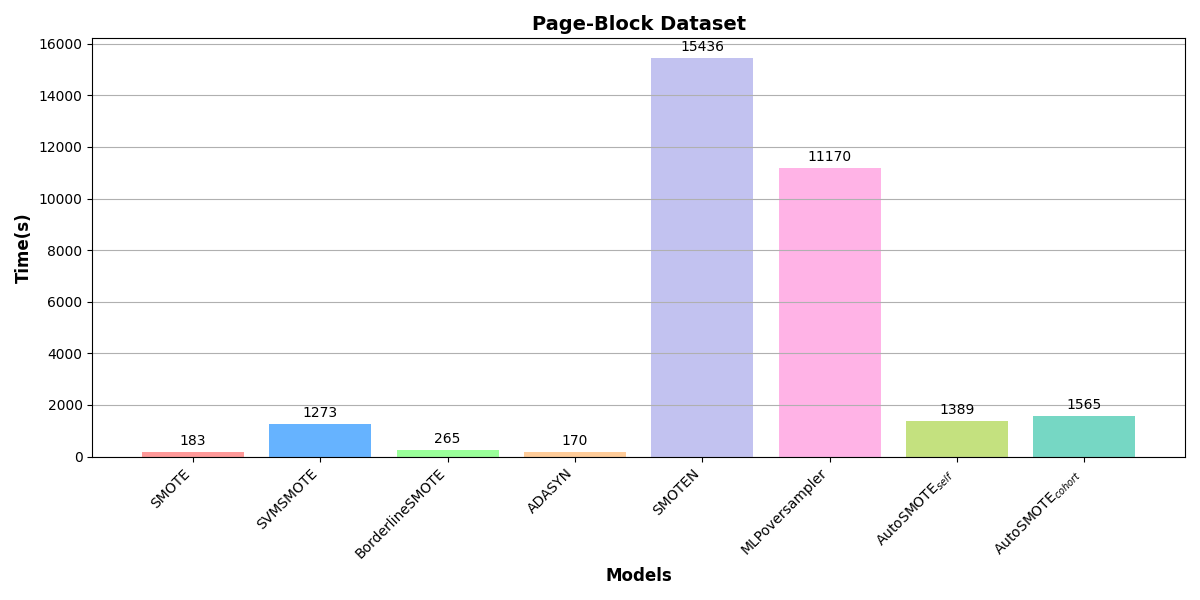}
     \end{subfigure}
        \caption{Training Time Comparison for Oversampling Algorithms}
        \label{fig:runtime}
\end{figure}

In this section, we compare the runtime efficiency of our model with other baselines. We evaluate the training time of our model variants alongside other oversampling algorithms under the same training conditions (ex: number of epochs, training testing split, etc.) on two datasets, as shown in Figure \ref{fig:runtime}. For a moderately sized dataset like Page-blocks, both AutoSMOTE variants have training times similar to those of traditional algorithms such as SVMSMOTE. Also, it is evident that our variants in both datasets have much less training times than intricate oversampling approaches such as SMOTE-N and MLP-Oversampler. Note that, we didn't include GAN and AE-based oversamplers in this comparison due to their insanely high training time (10,000 training epochs compared to 200 epochs in other models). Compared to other deep learning models, AutoSMOTE depicts favorable runtime complexity, demonstrating its applicability in large-scale real-world datasets. Overall, our model provides a strong performance without compromising much on the training complexity.

\section{Conclusion, and Future Work} \label{sec:conclusions}

In this work, we propose a novel formalisation for synthetic minority oversampling through a combination of multiple discrete decision criterias. Further, we introduce a deep learning-based oversampling framework that integrates these decision criterias and corresponding decisions into its learning process. The proposed approach can perform synthetic minority oversampling in a data-driven manner, enabling a pragmatic and pertinent synthetic data generation. We provide theoretical justifications for the design choices of our approach while empirically validating its effectiveness in the imbalanced classification task. 

For future work, we plan to incorporate more interpretable design criteria and decisions to our framework, enhancing its representation expressiveness.

\section*{Appendices}
\renewcommand{\thesection}{\Alph{section}}
\renewcommand{\thesubsection}{\Alph{section}.\arabic{subsection}}
\renewcommand{\theequation}
{\thesection.\arabic{equation}}
\numberwithin{equation}{section}

\setcounter{section}{0} 
\section{Aggregation Functions} \label{sec:agg}

In this section, we describe the implementation aspects of each aggregation function used in our model. 

\subsection{Linear Interpolation Function}

For a given minority sample $x \in \mathcal{X}$, let $x^{\star} \ in \mathcal{N}_k(x)$ be a randomly selected instance. We employ linear interpolation function to generate synthetic minority instances from $x$ as follows:

\begin{equation}
    \Tilde{x} = \lambda . x + (1- \lambda) . x^{\star} ; \quad \lambda \sim \text{Uniform}(0, 1) 
\end{equation}

\subsection{Maximum Function}

Maximum function generates the synthetic minority sample vector by computing the element wise maximum of the input vectors. For every $x \in \mathcal{X}$, $x$ can be represented as $(x_1, x_2, \dots x_f)$. The maximum function can be formulated as follows:
\begin{equation}
\begin{aligned}
    &\Tilde{x} = (y_1, y_2, \dots y_f) 
 \quad where,\\
    &y_i = \textsc{max}\bigg(x_i^{\star} | x^{\star} \in \{x\} \cup \mathcal{N}_k(x) \bigg) ; \quad for \quad i = (1, 2, \dots f)
\end{aligned}
\end{equation}

\subsection{Minimum Function}

Minimum function generates the synthetic minority sample vector by computing the element wise minimum of the input vectors. For every $x \in \mathcal{X}$, $x$ can be represented as $(x_1, x_2, \dots x_f)$. The maximum function can be formulated as follows:
\begin{equation}
\begin{aligned}
    &\Tilde{x} = (y_1, y_2, \dots y_f) 
 \quad where,\\
    &y_i = \textsc{min}\bigg(x_i^{\star} | x^{\star} \in \{x\} \cup \mathcal{N}_k(x) \bigg) ; \quad for \quad i = (1, 2, \dots f)
\end{aligned}
\end{equation}

\subsection{Sum Function}

Sum function generates synthetic samples by summing the input vectors as follows:
\begin{equation}
    \Tilde{x} = x + \sum_{x^{\star} \in \mathcal{N}_k(x)} x^{\star}
\end{equation}

\subsection{Average Function}

Average function generates synthetic samples by calculating the mean of input vectors as follows:
\begin{equation}
    \Tilde{x} = \frac{x + \sum_{x^{\star} \in \mathcal{N}_k(x)} x^{\star}} {k+1}
\end{equation}

\subsection{Weighted Average Function}

We generate the synthetic samples using the weighted average function as follows:
\begin{equation}
    \Tilde{x} = \frac{\sum_{x^{\star} \in \{x\} \cup \mathcal{N}_k(x)}w^{\star} \times x^{\star}}{\sum_{{x^{\star} \in \{x\} \cup \mathcal{N}_k(x)}} w^{\star}} ; \quad w^{\star} > 0
\end{equation}

\section*{Declaration of competing interest}

The authors declare that they have no known competing financial interests or personal relationships that could have appeared
to influence the work reported in this paper.

\bibliographystyle{elsarticle-num}

\bibliography{frontMatter}

\end{document}